\newtheorem{theorem}{Theorem}
\newtheorem{remark}{Remark}
\newtheorem{proposition}[theorem]{Proposition}
\title{Task-Agnostic Robust Representation Learning}
\author{A. Tuan Nguyen\thanks{Corresponding author: A. Tuan Nguyen, \texttt{tuan@robots.ox.ac.uk}}\;\;\thanks{University of Oxford} \and  Ser Nam Lim\thanks{Meta AI Research} \and Philip Torr\footnotemark[2]}
\date{March 9, 2022}
\begin{document}
\maketitle

\begin{abstract}
It has been reported that deep learning models are extremely vulnerable to small but intentionally chosen perturbations of its input. In particular, a deep network, despite its near-optimal accuracy on the clean images, often mis-classifies an image with a worst-case but humanly imperceptible perturbation (so-called adversarial examples). To tackle this problem, a great amount of research has been done to study the training procedure of a network to improve its robustness. However, most of the research so far has focused on the case of supervised learning. With the increasing popularity of self-supervised learning methods, it is also important to study and improve the robustness of their resulting representation on the downstream tasks. In this paper, we study the problem of robust representation learning with unlabeled data in a task-agnostic manner. Specifically, we first derive an upper bound on the adversarial loss of a prediction model (which is based on the learned representation) on any downstream task, using its loss on the clean data and a robustness regularizer. Moreover, the regularizer is task-independent, thus we propose to minimize it directly during the representation learning phase to make the downstream prediction model more robust. Extensive experiments show that our method achieves preferable adversarial performance compared to relevant baselines.
\end{abstract}

\section{Introduction}
Deep learning has achieved state-of-the-art performance in many tasks such as image classification, object detection, and natural language processing. Instead of having to select handcrafted features and representation of the input as in classical machine learning, deep learning has the ability to automatically learn a meaningful representation with deep networks and gradient descent. However, the success of deep learning relies on the availability of a large amount of labeled data, which is expensive in practice. Self-supervised learning \cite{chen2020simple,grill2020bootstrap,chen2021exploring,zbontar2021barlow} has gained interest as a solution to the above problem due to its ability to learn from unlabeled data. However, although the representation learned via self-supervised learning is often meaningful for downstream tasks, the resulting prediction model usually lacks adversarial robustness. Currently, to the best of our knowledge, virtually all existing adversarial training methods \cite{madry2017towards,zhang2019theoretically} require labels and/or the prediction task. Furthermore, for computational reasons, we do not want to re-do the expensive adversarial training for the downstream tasks (this is also in the spirit of self-supervised learning); therefore, the (theoretical) transferability of a robust representation among different tasks is also crucial. However, this transferability aspect of robustness has not been well-studied. For example, \cite{kim2020adversarial} proposes a framework for adversarial contrastive learning, which enforces the robustness of the representation network by finding worst-case adversaries that maximize the contrastive loss, followed by minimizing that loss with respect to the network parameters. However, it is not clear how this robustness can be transferred to a downstream task.

\begin{figure*}[t!]
	\centering
	\includegraphics[width=0.9\linewidth]{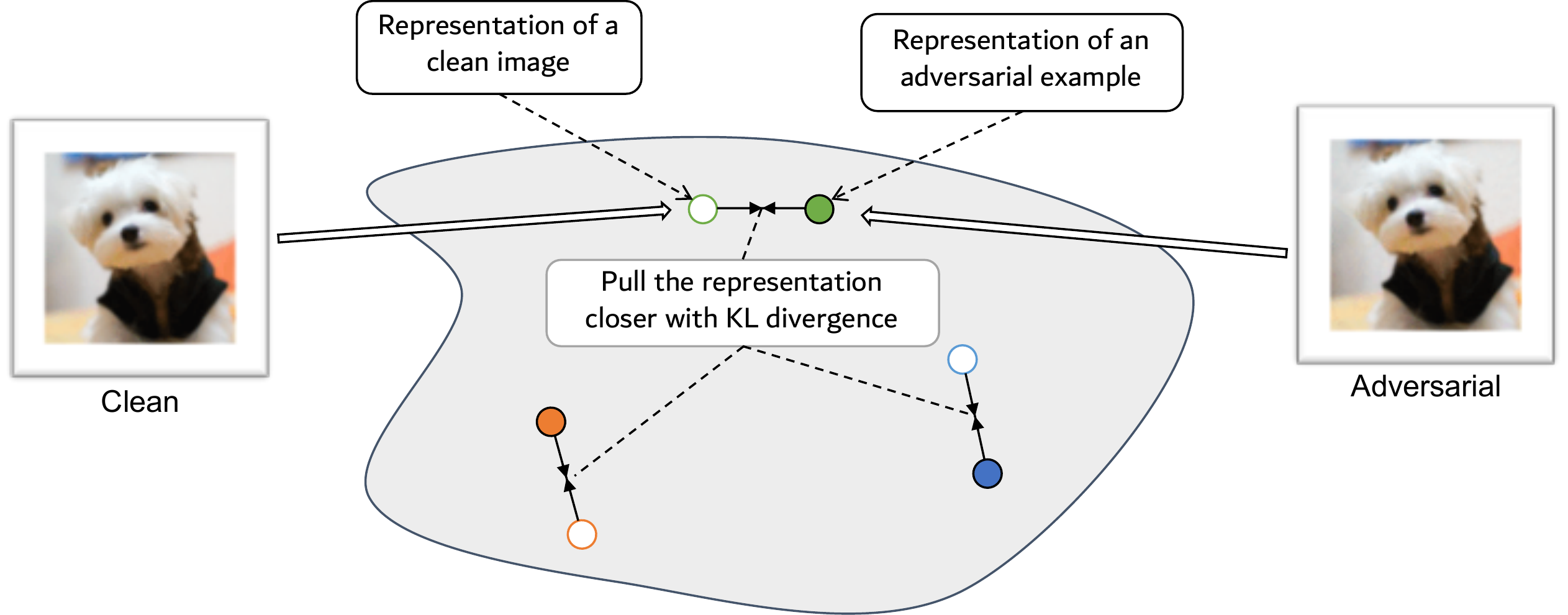}
	\vspace{-0.1in}
	\caption{\textbf{Illustration of our proposed regularizer}. This figure illustrates the (probabilistic) representation space, with each circle representing (the distribution of) the representation of an image. Non-filled circles depict the (probabilistic) representations of natural images and filled circles depict that of their adversarial examples. We use the KL divergence to pull the representations of an image and its adversarial images closer, which improves the bound of the adversarial loss on a downstream task.}
	\label{illustration}
\end{figure*}

\begin{figure}[t!]
        \centering
        \begin{subfigure}[b]{0.5\linewidth}
                \centering
                \includegraphics[width=\textwidth]{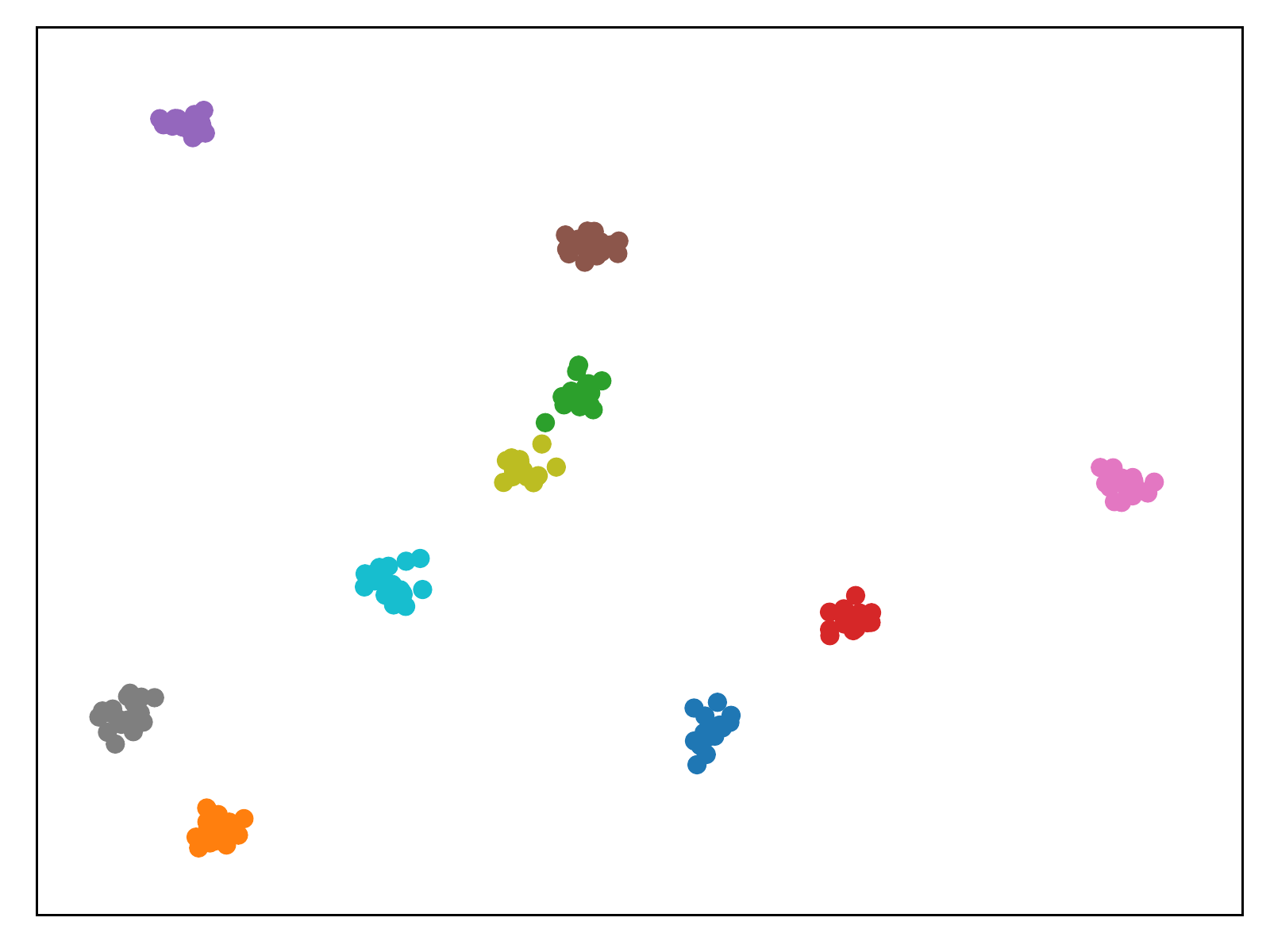}
        \end{subfigure}
        \hfill 
        \begin{subfigure}[b]{0.5\linewidth}
                \centering
                \includegraphics[width=\linewidth]{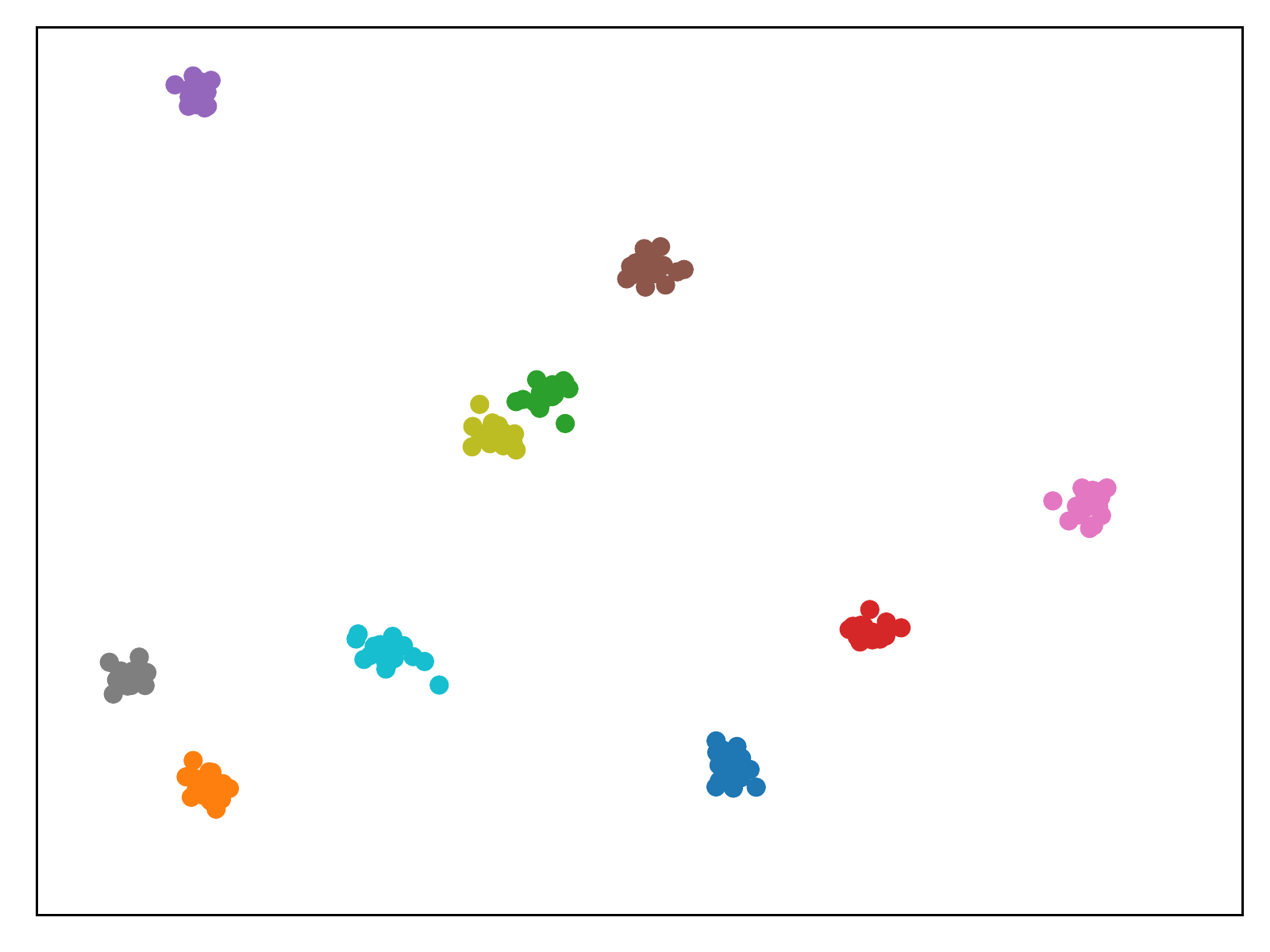}
        \end{subfigure}
        \vspace{-0.2in}
        \caption{\textbf{Visualization of the probablistic representation of our method}. Each color corresponds to a single image. For each image $x$, we sample 20 $z$'s from the probabilistic representation distribution $p_\theta(z|x)$ (hence the clusters of points). The left figure is for the original images and the right figure is for the unsupervised adversaries. Our method enforces the representation distribution of an image to be close to that of its adversaries, making the downstream models more robust.} \label{visualization}
\end{figure}

In this paper, we develop a task-independent robust representation learning method to tackle the above issue. Intuitively, if we could enforce the representation $z$ of an input $x$ to be close to that of its neighbors in the adversarial ball (for example, a $l_\infty$ ball around $x$), then it would make the downstream model more robust (Figure~\ref{illustration}). In particular, any adversarial example $x^{adv}$ of $x$ produced by an attack on the downstream task is within this adversarial ball, thus its representation would be close to that of $x$, making it harder for the task's decision boundary to separate them. However, there are some challenges when applying this idea naively. One might think that we can find a worst-case adversarial example $x^{u\_adv}$ (unsupervised adversary, as opposed to the typical supervised adversary $x^{adv}$) with the largest distance to $x$ in the representation space (based on some distance metrics), and minimize that distance with respect to the network's parameters. This, however, is problematic for a typical deterministic representation network, since it is not trivial how to define the distance on the representation space (for example, using the $l_2$ distance between the representations would not help since the model can ``cheat'' by making the norm of the representation smaller, which can be easily compensated by making the weights of the next fully connected layer bigger). Fortunately, in this paper, we observe that with a probabilistic representation network $p_\theta(z|x)$, we can define the ``closeness'' of the representations by the KL divergence, which allows for creating adversarial examples during unsupervised/self-supervised learning via an inner maximization of the KL term ($\text{KL}[p_\theta(z|x)|p_\theta(z|x^{u\_adv})]$). Visualization of the representation distribution of an image and its adversarial example of our method can be found in Figure~\ref{visualization}. We also show that this leads to a robustness regularizer for unsupervised learning that is \textbf{provably transferable} to downstream tasks (more details in the Approach section). Note that this idea can also be thought of as enforcing a low Lipschitzness of the representation network. However, as mentioned above, a typical norm-based (e.g., $l_1$ and $l_2$) Lipschitzness in the representation space is not meaningful (due to the aforementioned ``cheat''), whereas using KL divergence solves the problem and also leads to a theoretical guarantee on the downstream robustness.

In this work, we first propose an upper bound of the adversarial loss of a model for a certain prediction task based on its loss on the clean data and a robustness regularizer. The regularizer, which is based on the KL divergence as described above, only depends on the representation mapping and is independent of the task. Therefore, we can minimize this regularizer term during the representation learning phase (of an unsupervised or self-supervised method), so that the robustness will be transferred directly to a prediction model when the representation is used for a downstream task. Our method can be straightforwardly applied to any representation learning method, including unsupervised learning (e.g., VAE) or any self-supervised learning model.

Our contributions in this work are threefold:
\begin{itemize}
    \item We derive an upper bound on the adversarial loss of a prediction model on a certain task, based on a task-independent robustness regularizer. 
    \item We propose to incorporate the above regularizer into existing representation learning frameworks to improve the adversarial robustness of the representation on downstream tasks.
    \item We demonstrate the use of our proposed regularizer with existing representation learning frameworks, and show that the resulting models achieve SOTA results in the unsupervised robust representation learning task.
\end{itemize}

\section{Related Works}
\vspace{-0.1in}
\subsection{Adversarial Training}
It has been shown that neural networks, even with a high classification accuracy, are vulnerable to small bounded (but intentionally worst-case chosen) adversarial perturbations \cite{szegedy2013intriguing}. From this observation, many methods have been proposed to alter the typical training procedure of a neural network to improve its adversarial robustness (hence the name adversarial training). Two of the most often-used methods are AT \cite{madry2017towards} and TRADES \cite{zhang2019theoretically}, in which the algorithms try to find a worst-case perturbation of the input (called an adversarial example) with an inner maximization problem and minimize the loss (or a regularizer) with respect to that adversarial example. Studying the inner maximization problem is also an active and attractive research direction, with the aim to develop fast and/or accurate methods to find the adversarial examples. For example, PGD \cite{madry2017towards} uses multiple steps of projected gradient descent, which is accurate (and somewhat ``gold-standard'') but expensive. For this computational reason, many one-step algorithms have been proposed, including FGSM \cite{goodfellow2014explaining}, RS-FGSM \cite{wong2020fast} and GradAlign \cite{andriushchenko2020understanding}, with varying level of success. These adversarial defense/attack methods are relevant to our work since we also need to solve an inner maximization problem to find (unsupervised) adversarial examples for the robustness regularizer.
\vspace{-0.1in}
\subsection{Self-Supervised Learning}
As mentioned earlier, self-supervised learning has received great interest due to its ability to learn from unlabeled data, which reduces the need for expensive annotations of images. Self-supervised learning is based on a user-defined pretext task, which can be as simple as to predict the rotation angle of an image or more complex such as to solve a Jigsaw puzzle \cite{noroozi2016unsupervised}. Recently, a popular and successful self-supervised learning paradigm is to learn representations that are invariant under different
augmentations (also referred to as ‘distortions’) of an image \cite{chen2020simple,zbontar2021barlow,chen2021exploring}. The idea of this learning paradigm is to maximize the similarity between representations of two augmentations of an image, while avoiding network collapse (to a trivial and meaningless solution such as a constant function) by different objective functions. Since our proposed robustness regularizer is task-independent, it can be straightforwardly applied to most of these self-supervised learning methods.

\subsection{Robust Self-Supervised Learning}
Recently, the field of unsupervised robust representation learning has gained increasing interest, with the goal to leverage unlabeled data and learn a robust and meaningful representation for downstream tasks. One of the main baselines to our work is RoCL \cite{kim2020adversarial}, which applies adversarial training directly on the contrastive loss in SimCLR. However, it is not clear if a representation that is robust to the contrastive loss will be robust to the prediction loss of a downstream task. Some other methods \cite{jiang2020robust,hendrycks2019using} perform adversarial training on the self-supervised task to aid the adversarial training of the main task (not to replace), which are less related to our work. Similarly, \cite{alayrac2019labels} utilizes unlabeled data to help train a robust classifier in a semi-supervised manner. Meanwhile, \cite{awasthi2021adversarially} propose an algorithm to improve the robustness of PCA. However, PCA is not a common component of modern deep learning architectures/pipelines, thus its practicality might be limited.
\vspace{-0.1in}
\section{Approach}
\vspace{-0.1in}
\subsection{Problem statement}

Assume that we have a distribution $p(x)$ of data (e.g., images), with $x \in \mathcal{X}$ as the input. We are interested in the problem of robust representation learning, where we want to learn a representation $z$ of $x$ with the mapping $p_\theta(z|x)$, parameterized by $\theta$; so that $z$ is meaningful for downstream tasks (to be defined below), and that any classifier (based on $z$) of the downstream tasks should be robust against adversarial attacks. For computational reasons, we do not want to re-do the adversarial training, and desire that the robustness transfer directly to the downstream tasks. The representation $z$ can be probabilistic (e.g., $p_\theta(z|x)=\mathcal{N}(z;\mu_\theta(x),\sigma_\theta(x))$) or deterministic (i.e., $p_\theta(z|x)=\delta_{g_\theta(x)}(z)$ with a deterministic function $g_\theta$). In this paper, we will especially consider a probabilistic representation mapping. In practice, the representation can be learn by unsupervised learning methods (e.g., VAE) or self-supervised learning methods (with a pretext task).

\begin{remark}
    A note on the choice of the representation distribution $p_\theta(z|x)$.
    \label{gaussian}
\end{remark}
\vspace{-0.1in}
\quad As mentioned earlier, in this paper, we especially consider a probabilistic representation mapping. Specifically, we use a Gaussian distribution in all of our experiments, e.g., $p_\theta(z|x)=\mathcal{N}(z;\mu_\theta(x),\sigma_\theta(x))$. This is just a design choice that is simple and works well in practice, and our work is not limited by this choice of the distribution. We can use almost any other distribution (with a known parameterization trick to allow for backpropagation). Also, note that the Gaussian representation network is a generalized version of a deterministic network (it becomes a deterministic network when $\sigma_\theta(x) \rightarrow 0 \;\forall x$); therefore, this network choice is not at all restricted when compared to a typical deterministic network.

\vspace{0.15in}

Any downstream task $T$ is defined by a conditional distribution $p_T(y|x)$ where $y \in \mathcal{Y}$ is the label. The joint data distribution of this task is $p_T(x,y)=p(x)p_T(y|x)$. With the representation mapping $p_\theta(z|x)$ learned in advance, we want to learn a classifier $\hat{p}_T(y|z)$ (parameterized by $\omega_T$, which we will omit for notation simplicity) for the task $T$. This is often called an output head that classifies $y$ given $z$.\\
The predictive distribution of $y$ given $x$ for this task $T$ is:
\begin{align}
\mathbb{E}_{p_\theta(z|x)}[\hat{p}_T(y|z)]
\label{predictive}
\end{align}

(for a deterministic representation mapping, Eq.~\ref{predictive} simplifies into $\hat{p}_T(y|z=g_\theta(x))$)

\begin{remark}
    On the inference complexity of a probabilistic representation.
\end{remark}
\vspace{-0.08in}
\quad Using a probabilistic representation, we need to sample multiple $z$ from $p_\theta(z|x)$ to estimate Eq.~\ref{predictive} with Monte Carlo sampling during test time. However, this is not a big issue for the representation learning framework, since we only need to run the representation network $p_\theta(z|x)$ (which is usually deep) once to get a distribution of $z$. After sampling multiple $z$ from that distribution, we only need to rerun the classifier $\hat{p}_T(y|z)$, which is usually a small network (e.g., often contains one or a few fully-connected layers). Furthermore, we can also run $\hat{p}_T(y|z)$ (a small network) in parallel for multiple $z$ to reduce inference time if necessary.

During training of the downstream task $T$, a single $z$ is sampled per input $x$ from the learned representation mapping $p_\theta(z|x)$; and the output head $\hat{p}_T(y|z)$ is trained via minimizing the following training objective:
\begin{equation}
    \mathbb{E}_{p_T(x,y)}\left[\mathbb{E}_{p_\theta(z|x)}[-\log \hat{p}_T(y|z)]\right]
    \label{train_obj}
\end{equation}

With common choices of the predictive distribution $\hat{p}_T(y|z)$, the quantity $-\log$\;$\hat{p}_T(y|z)$ is often non-negative. For example, with a categorical predictive distribution in a classification problem, this term is the cross-entropy loss; whereas with a Gaussian predictive distribution (with a fixed variance) in a regression problem, the above term becomes the squared error loss (with an additive constant).

\makebox[0.96\linewidth][s]{Note also that the objective in Eq.~\ref{train_obj} is an upper bound of the true loss for task $T$}\\$\mathbb{E}_{p_T(x,y)}\left[-\log \mathbb{E}_{p_\theta(z|x)}[\hat{p}_T(y|z)]\right]$ (due to Jensen's inequality), where $-\log \mathbb{E}_{p_\theta(z|x)}[\hat{p}_T(y|z)]$ is the loss of a datapoint $(x,y)$.

Now we formally define the adversarial robustness of the network on the downstream task. Denote $A(x)$ to be the set of adversarial examples of $x$. This set is different for different kinds of adversarial attack; for example, with an $l_\infty$ attack, $A(x)$ is the $l_\infty$-ball around $x$ with a predefined radius of $\epsilon$. The adversarial loss of the task $T$ is:
\begin{align}
\vspace{-0.1in}
\mathbb{E}_{p_T(x,y)}\left[\max_{x^{adv}\in A(x)} -\log \mathbb{E}_{p_\theta(z|x^{adv})}[\hat{p}_T(y|z)]\right]
\vspace{-0.1in}
\end{align}

Intuitively, this means that an attacker seeks to find an adversarial example $x^{adv}$ of each input $x$ that maximizes the loss w.r.t. its label $y$; and we, as the defender, want to minimize that loss. In the next subsections, we will discuss how we can minimize this adversarial loss, even in a task-agnostic manner during the representation learning phase.

\vspace{-0.1in}
\subsection{A bound on the adversarial loss}
We first propose a bound on the adversarial loss based on the downstream training objective and a robustness regularizer:

\begin{proposition} Assuming that $\forall x, \; p_\theta(z|x)$ has the same support set $\mathcal{Z}$ (e.g., $p_\theta(z|x)$ is Gaussian); and that $-\log \hat{p}_T(y|z) \leq M \;\forall z\in\mathcal{Z}, y\in\mathcal{Y}$ \footnote{In the classification problem, we can enforce this quite easily by augmenting the output softmax of the classifier $\hat{p}_T(y|z)$ so that each class probability is always at least $\exp{(-M)}$. For example, if we choose $M=3 \Rightarrow \exp{(-M)}\approx 0.05$, and if the output softmax is $(p_1,p_2,...,p_C)$, we can augment it into $(p_1\cdot K+0.05,p_2\cdot K+0.05,...,p_C\cdot K+0.05)$, where  $K=1-0.05\cdot C$ and $C$ is the number of classes. This ensures the bound for the loss of a datapoint, while retaining the output prediction class.}, we have:
\vspace{-0.1in}
\begin{align}
\mathbb{E}_{p_T(x,y)}\left[\max_{x^{adv}\in A(x)} -\log \mathbb{E}_{p_\theta(z|x^{adv})}[\hat{p}_T(y|z)]\right] \leq\; \mathbb{E}_{p_T(x,y)}\left[\mathbb{E}_{p_\theta(z|x)}[-\log \hat{p}_T(y|z)]\right] \nonumber \\
+ \frac{M}{\sqrt{2}}\sqrt{\mathbb{E}_{p(x)}\left[\max_{x^{u\_adv}\in A(x)} \textup{KL}[p_\theta(z|x)|p_\theta(z|x^{u\_adv})]\right]}
\label{bound}
\end{align}
\end{proposition}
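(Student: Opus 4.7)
The plan is to chain together Jensen's inequality, a bounded-function TV inequality, Pinsker's inequality, and then a final Jensen step to push an expectation under a square root. The overall structure is: first reduce the outer $-\log \mathbb{E}[\cdot]$ to an $\mathbb{E}[-\log \cdot]$ so that the quantity being perturbed by $x \mapsto x^{adv}$ is an expectation of a bounded function $f(z) := -\log \hat{p}_T(y|z) \in [0, M]$; then compare the values of that expectation under $p_\theta(z|x)$ and $p_\theta(z|x^{adv})$ via a distance between these distributions; finally re-express the bound in terms of the KL divergence required by the statement.

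Concretely, first I would fix a point $(x,y)$ and an arbitrary $x^{adv} \in A(x)$, and use the convexity of $-\log$ (Jensen's inequality) to write
\[
-\log \mathbb{E}_{p_\theta(z|x^{adv})}[\hat{p}_T(y|z)] \;\leq\; \mathbb{E}_{p_\theta(z|x^{adv})}[-\log \hat{p}_T(y|z)].
\]
Next, to compare this with the clean quantity $\mathbb{E}_{p_\theta(z|x)}[-\log \hat{p}_T(y|z)]$, I would use the standard fact that for any measurable $f : \mathcal{Z} \to [0,M]$ and any two distributions $p,q$ on the common support $\mathcal{Z}$, we have $|\mathbb{E}_p[f] - \mathbb{E}_q[f]| \le M \cdot \mathrm{TV}(p, q)$ — the sharper constant $M$ (rather than $2M$) follows by centering $f$ to $f - M/2$ before applying the $L^1$ bound, using $\int (p - q) = 0$. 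Applied with $p = p_\theta(z|x^{adv})$, $q = p_\theta(z|x)$, and $f(z) = -\log \hat{p}_T(y|z)$, this gives
\[
\mathbb{E}_{p_\theta(z|x^{adv})}[-\log \hat{p}_T(y|z)] \;\le\; \mathbb{E}_{p_\theta(z|x)}[-\log \hat{p}_T(y|z)] + M \cdot \mathrm{TV}\bigl(p_\theta(z|x),\,p_\theta(z|x^{adv})\bigr).
\]

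Then I would convert the TV distance into a KL distance via Pinsker's inequality, $\mathrm{TV}(p,q) \le \sqrt{\tfrac{1}{2} \mathrm{KL}[p\,\|\,q]}$, choosing the orientation to match the statement: $\mathrm{TV}\bigl(p_\theta(z|x),\,p_\theta(z|x^{adv})\bigr) \le \sqrt{\tfrac{1}{2}\mathrm{KL}[p_\theta(z|x) \,\|\, p_\theta(z|x^{adv})]}$. Now I would take $\max_{x^{adv} \in A(x)}$ on both sides; the clean term is independent of $x^{adv}$, and monotonicity of $\sqrt{\cdot}$ lets me interchange the max with the square root. Denoting the unsupervised adversary by $x^{u\_adv}$ (any maximizer is a legitimate adversary for the regularizer), this yields the pointwise bound
\[
\max_{x^{adv}\in A(x)} -\log \mathbb{E}_{p_\theta(z|x^{adv})}[\hat{p}_T(y|z)] \le \mathbb{E}_{p_\theta(z|x)}[-\log \hat{p}_T(y|z)] + \tfrac{M}{\sqrt{2}}\sqrt{\max_{x^{u\_adv}\in A(x)} \mathrm{KL}[p_\theta(z|x)\,\|\,p_\theta(z|x^{u\_adv})]}.
\]

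The last step is to take $\mathbb{E}_{p_T(x,y)}$ on both sides; the regularizer term only depends on $x$, so the outer expectation collapses to $\mathbb{E}_{p(x)}$. Finally I would apply Jensen's inequality once more — this time in the concave direction, since $\sqrt{\cdot}$ is concave — to pull the expectation inside the square root, producing the stated bound. I do not expect a serious obstacle: the only subtlety is getting the tight constant $\frac{M}{\sqrt{2}}$, which requires both the centered TV bound (giving $M$, not $2M$) and the correct $\tfrac{1}{2}$ factor in Pinsker; everything else is bookkeeping with Jensen applied in the two opposite convexity senses and a monotonicity argument for pulling $\max$ through $\sqrt{\cdot}$.
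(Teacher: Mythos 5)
Your proposal is correct and follows essentially the same route as the paper's proof: Jensen's inequality on $-\log$, a total-variation bound for expectations of a function valued in $[0,M]$ with the sharp constant $M$, Pinsker's inequality, and a final Jensen step to move the expectation inside the square root. The only cosmetic difference is that you obtain the constant $M$ by centering $f$ to $f-M/2$, whereas the paper derives the same bound by splitting $\mathcal{Z}$ into the sets where $p_\theta(z|a(x))-p_\theta(z|x)$ is nonnegative and negative and discarding the nonpositive part.
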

\begin{proof}
\vspace{-0.1in}
provided in the supplementary file.
\end{proof}

The first term $\mathbb{E}_{p_T(x,y)}\left[\mathbb{E}_{p_\theta(z|x)}[-\log \hat{p}_T(y|z)]\right]$ is the downstream training loss in Eq.~\ref{train_obj}, and will be minimized during the training of the output head $\hat{p}_T(y|z)$ for task $T$. 

We call the second term $\mathbb{E}_{p(x)}\left[\max_{x^{u\_adv}\in A(x)} \text{KL}[p_\theta(z|x)|p_\theta(z|x^{u\_adv})]\right]$ a robustness regularizer. Since we do not want to perform adversarial training for the downstream tasks, we want to minimize this term during the representation learning phase. Since this term is label-free and task-independent, if we minimize it during the representation learning phase, it will transfer directly to the downstream task and help minimize the bound in Eq.~\ref{bound}. This will be discussed further in Subsection~\ref{application}. Recall that we use the Gaussian distribution of the per-image representation network, i.e., $p_\theta(z|x)=\mathcal{N}(z;\mu_\theta(x),\sigma_\theta(x))$, so this KL term can be computed analytically (and exactly). Also note that, as discussed in Remark~\ref{gaussian}, the Gaussian representation is a generalized version of a typical deterministic representation, so it is sufficiently expressive for typical Deep Learning problems.

\paragraph{\textbf{Comparison between our bound and TRADES~\cite{zhang2019theoretically}:}}
\begin{itemize}
    \item The bound in TRADES only works for the case of binary classification, while our bound works for the general case of supervised learning (including multi-class classification and regression).
    \item Our robustness regularizer is label-free and task-independent. Therefore, we can minimize it in the representation learning phase (with unsupervised or self-supervised tasks), and it will transfer directly to the downstream tasks. On the other hand, the robustness regularizer in TRADES is task-dependent, thus minimizing the term for a pretext task does not necessarily transfer to the downstream task. Furthermore, TRADES's robustness regularizer requires a predictive distribution of a task to compute, and this might not be applicable to many self-supervised learning methods where there is no prediction task (e.g., contrastive learning). These arguments are also true for almost all existing robustness methods.
\end{itemize}

\paragraph{\textbf{Trade-off between Clean Accuracy and Adversarial Robustness}} The trade-off between a model's performance on clean input and adversarial input has been well observed in practice \cite{zhang2019theoretically}; and this phenomenon can also be explained with our bound. Minimizing the first term in Eq.~\ref{bound} will help the model's performance on clean input, while minimizing the second term increase the model's robustness against adversarial input; and there is an inherent trade-off between them. Minimizing the second term $\mathbb{E}_{p(x)}\left[\max_{x^{u\_adv}\in A(x)} \text{KL}[p_\theta(z|x)|p_\theta(z|x^{u\_adv})]\right]$ too much will compress the representation, hurting its expressiveness and separability among classes. For example, consider the $l_2$ defense with radius $\epsilon$ ($A(x)$ will be the $l_2$-ball around $x$ with radius $\epsilon$). The above regularizer encourages the representation distribution of an input $x$ to be similar to that of its neighbours in the $l_2$-ball. Now, if there exist two inputs $x_1$ and $x_2$ from different classes such that $\epsilon < ||x_1-x_2||_2 < 2\epsilon$, then these two points do not belong to the other's adversarial set. Let $x'=(x_1+x_2)/2$, it follows that $||x_1-x'||_2 = ||x_2-x'||_2 = ||x_1-x_2||_2/2 < \epsilon$, meaning $x' \in A(x_1)$ and $x' \in A(x_2)$. Note that minimizing the regularizer term too much will encourage the representation distribution of both $x_1$ and $x_2$ to be similar to that of $x'$; and the classifier might fail to separate the two datapoints. Therefore, it might hurt the expressiveness and separability of the representation, especially for the inputs around the decision boundary.
\vspace{-0.1in}
\subsection{Applications and Use Cases}
\label{application}
\makebox[\linewidth][s]{In this subsection, we will discuss the use of our robustness regularizer} $\mathbb{E}_{p(x)}\left[\max_{x^{u\_adv}\in A(x)} \text{KL}[p_\theta(z|x)|p_\theta(z|x^{u\_adv})]\right]$. Although this term can be used directly in a supervised learning setting, a far more exiting application is to minimize it during the representation learning phase (of an unsupervised or self-supervised method). As mentioned earlier, since this term is task-independent, if we minimize it during the representation learning phase, it will transfer directly to downtream tasks, improving the model's adversarial robustness on these tasks (see Eq.~\ref{bound}). We name our regularizer \textbf{Urkle} (\textbf{U}nsupervised \textbf{R}obustness with \textbf{KL} divergenc\textbf{E}). In this subsection, we demonstrate some example scenarios to learn a meaningful (and robust) representation, namely with unsupervised learning (via VAE) and self-supervised learning (with any pretext task).

\vspace{-0.1in}
\subsubsection{With VAE}
\label{with_vae}
In VAE \cite{kingma2013auto}, we have an encoder $p_\theta(z|x)$ (which also acts as our representation mapping), a decoder $q_\phi(x|z)$, and a prior $p(z)$, the objective of VAE (negative ELBO) is:
\begin{align}
\mathbb{E}_{p(x)}\left[\mathbb{E}_{p_\theta(z|x)}[-\log q_\phi(x|z)]\right] + \mathbb{E}_{p(x)}\left[\text{KL}[p_\theta(z|x)|p(z)]\right]
\end{align}

Here we add our robustness regularizer to learn a robust encoder $p_\theta(z|x)$, leading to the below objective:
\begin{align}
\mathbb{E}_{p(x)}\left[\mathbb{E}_{p_\theta(z|x)}[-\log q_\phi(x|z)]\right] + \beta_{VAE}\mathbb{E}_{p(x)}\left[\text{KL}[p_\theta(z|x)|p(z)]\right] \nonumber\\ 
+ \beta_{robust} \mathbb{E}_{p(x)}\left[\max_{x^{u\_adv}\in A(x)} \text{KL}[p_\theta(z|x)|p_\theta(z|x^{u\_adv})]\right]
\label{obj_vae}
\end{align}

Note that we also add a coefficient $\beta_{VAE}$ for the VAE's regularizer
$\mathbb{E}_{p(x)}\left[\text{KL}[p_\theta(z|x)|p(z)]\right]$ (similar to $\beta$-VAE \cite{higgins2016beta}). We use a small value of $\beta_{VAE}$ in practice since we found that this term might hinder the expressiveness of the representation $p_\theta(z|x)$.

All three expectation terms in Eq~\ref{obj_vae} can be estimated with a minibatch of input $x$'s. For each input $x$, we find the unsupervised adversary $x^{u\_adv}$ by the inner maximization problem of $\max_{x^{u\_adv}\in A(x)} \text{KL}[p_\theta(z|x)|p_\theta(z|x^{u\_adv})]$ with, for example, the PGD algorithm. The objective in Eq~\ref{obj_vae} is minimized with respect to $\theta$ and $\phi$ (thus it will learn an encoder $p_\theta(z|x)$ such that the representation of an input is close to that of its adversaries).

\vspace{-0.1in}
\subsubsection{With a self-supervised learning task}
Let's assume we have a pretext task designed to learn a meaningful representation of $x$. Since self-supervised tasks and their loss functions are diverse, we will refer to the loss function as $L_{ssl}(p(x),p_\theta(z|x),\phi)$ in general, where $p_\theta(z|x)$ is a representation mapping used to solve the task (e.g., might be used for a pretext classification task, to solve a jigsaw puzzle, or to minimize the contrastive loss in the contrastive learning framework) and $\phi$ is any additional parameters (apart from $\theta$) used for this self-supervised task (e.g., parameters of the projector in SimCLR \cite{chen2020simple}, parameters of the output head of some pretext classification task).

Similarly, we can also add the robustness regularizer term here to to learn a robust representation mapping $p_\theta(z|x)$, leading to the following objective:
\begin{align}
 L_{ssl}(p(x),p_\theta(z|x),\phi) + \beta_{robust} \mathbb{E}_{p(x)}\left[\max_{x^{u\_adv}\in A(x)} \text{KL}[p_\theta(z|x)|p_\theta(z|x^{u\_adv})]\right]
\end{align}

\paragraph{\textbf{Demonstration with SimCLR \cite{chen2020simple}:}} We re-emphasize that our proposed robustness regularizer can be applied to almost all unsupervised and self-supervised representation learning methods. However, since our main baseline \cite{kim2020adversarial} is built upon SimCLR, we also use SimCLR (with a slight adaptation of using a probabilistic representation network) in our experiments for a fair comparison (note that we conjecture using more recent and advanced self-supervised learning methods \cite{zbontar2021barlow,grill2020bootstrap} will likely improve further the performance of our model). We demonstrate the resulting method here in details. 

First of all, SimCLR (and other contrastive learning methods) learns from two augmentations of each image. Let denote a minibatch of data as $\{(x_{1,1},x_{1,2})$, $(x_{2,1},x_{2,2}),...,(x_{b,1},x_{b,2})\}$, where $x_{i,1}$ and $x_{i,2}$ are two augmented version of a same original image $x_i$, and are called positive examples of each other. The goal of contrastive learning is to encourage the representation to be similar among the positive images. For each $x$ we sample a single $z$ from the network $p_\theta(z|x)$ to make the minibatch of representation $\{(z_{1,1},z_{1,2}),(z_{2,1},z_{2,2}),...,(z_{b,1},z_{b,2})\}$. Then the loss function of SimCLR is:
\begin{align}
\ell_{SimCLR} = \ell_{NT\_Xent}((z_{1,1},z_{1,2}),(z_{2,1},z_{2,2}),...,(z_{b,1},z_{b,2}))
\end{align}
where $\ell_{NT\_Xent}$ is the so-called ``normalized temperature-scaled cross entropy'' loss function that takes input as a batch of tuples $t_1,t_2,...,t_b$, with each $t_i$ is a tuple of representations from a set of positive images (in the case above each tuple would be of length 2). Specifically:
\begin{align}
&\ell_{NT\_Xent}(t_1,t_2,...,t_b) = \sum_{i=1}^b \sum_{z,z'\in t_i,z\neq z'}\frac{\exp(\mathrm{sim}(z,z')/\tau)}{\sum_{j\neq i}\sum_{z''\in t_j} \exp(\mathrm{sim}(z,z'')/\tau)}
\label{nt_xent}
\end{align}
where $\mathrm{sim}(z,z')$ is the cosine similarity between $z$ and $z'$ (we also often project the representation $z$ to a lower dimensional space before calculating the cosine similarity), and $\tau$ is the temperature (often set to $0.5$).

To implement our regularizer, we find the unsupervised adversaries\\ $\{(x^{u\_adv}_{1,1},x^{u\_adv}_{1,2}),...,(x^{u\_adv}_{b,1},x^{u\_adv}_{b,2})\}$ of $\{(x_{1,1},x_{1,2}),...,(x_{b,1},x_{b,2})\}$ that maximize:
\begin{align}
    \frac{1}{2b}\sum_{i=1}^b \sum_{k=1}^2 \max_{x^{u\_adv}_{i,k} \in A(x_{i,k})} \text{KL}[p_\theta(z|x_{i,k}))|p_\theta(z|x^{u\_adv}_{i,k}))]
\end{align}
Following \cite{kim2020adversarial}, we also use PGD \cite{madry2017towards} to solve this inner maximization problem.

Let $\ell_{Urkle}$ be the value of the above maximum, i.e.:
\begin{align}
    \ell_{Urkle} = \frac{1}{2b}\sum_{i=1}^b \sum_{k=1}^2 \text{KL}[p_\theta(z|x_{i,k}))|p_\theta(z|x^{u\_adv}_{i,k}))]
\end{align}
with $\{x^{u\_adv}_{i,k}\}$ found above.

Now we can add the regularizer $\ell_{Urkle}$ directly to the original loss function $\ell_{SimCLR}$. However, we note that $x^{u\_adv}_{i,1}$ and $x^{u\_adv}_{i,2}$ can also be treated as positive images of $x_{i,1}$ and $x_{i,2}$; therefore, we also include them when compute the $NT\_Xent$ loss. The final loss function of our model as:
\begin{align}
    &\ell_{NT\_Xent}((z_{i,1},z_{i,2},z^{u\_adv}_{i,1},z^{u\_adv}_{i,2})_{i=1}^b)  + \beta_{robust} \ell_{Urkle}
\end{align}
where $z^{u\_adv}_{i,k} \sim p_\theta(z|x^{u\_adv}_{i,k}) \;\forall i \in \overline{1,b}, k \in \{1,2\}$ , $\ell_{NT\_Xent}$ is calculated as in Eq~\ref{nt_xent} (in this case each tuple is of length 4), and $\beta_{robust}$ is a hyperparameter.
\vspace{-0.1in}
\paragraph{\textbf{PGD attack/defense:}} Since Projected Gradient Descent is used in both training and evaluation in our experiments, we briefly review it here to make the paper more self-contained. Let's assume that we need to find $x'$ within an $l_\infty$-norm (or other norms) ball of radius $\epsilon$ around $x$ that maximizes the function $f(x')$. The PGD algorithm is as follows:
\begin{enumerate}
    \item Initialize $x^0$ to $x$ (possibly with a small added random noise).
    \item Update $x^i=x^{i-1} + \alpha \texttt{sign}(\nabla_{x^{i-1}}f(x^{i-1}))$ with a step size $\alpha$, and clip $x^i$ to the $\epsilon$-ball that is being considered.
    \item Repeat step 2 $k$ times, and set $x' = x^k$.
\end{enumerate}

Note that we can also use any other adversarial attack/defense methods for the inner maximization problem. We use PGD in this paper because it is considered ``gold-standard'' at the moment, and it is also used by our main baseline \cite{kim2020adversarial}.
\vspace{-0.1in}
\section{Experiments}
\vspace{-0.1in}
We conduct extensive experiments to validate our method. In this section, we describe these experiments in details. For more information regarding the experimental settings and the baselines, please refer to our supplementary file.
\vspace{-0.25in}
\subsection{Datasets}
\vspace{-0.05in}
\paragraph{\textbf{MNIST \cite{lecun-mnisthandwrittendigit-2010}}} contains 70000 images of hand-written digits with the classification task of 10 digits.
\vspace{-0.07in}
\paragraph{\textbf{CIFAR10 \cite{krizhevsky2009learning}}} consists of 60000 images of size 32x32, and over ten classes: airplane, automobile, bird, cat, deer, dog, frog, horse, ship and truck.
\vspace{-0.07in}
\paragraph{\textbf{CIFAR100 \cite{krizhevsky2009learning}}} Similar to CIFAR10, CIFAR100 also consists of 60000 images with the size 32x32. The task is classification with 100 different classes.
\vspace{-0.07in}
\paragraph{\textbf{ImageNet \cite{deng2009imagenet}}} is a large scale real-world computer vision dataset, which consists of 1000 classes. To the best of our knowledge, this dataset has not been considered by previous adversarial self-supervised learning methods. 

\vspace{-0.1in}
\subsection{Experimental Settings}
Within each experiment, we use the same network as the representation network for all models. Since our representation network is probabilistic, it only differs from the other deterministic networks in the last layer. In particular, for a representation of size $d_z$, the last layer's dimension of a deterministic representation network is $d_z$, while that of a probabilistic network is $2\cdot d_z$ ($d_z$ for $\mu$ and $d_z$ for $\sigma^2$). Also, although our method can be used for any adversarial attack (e.g., $l_1$ or $l_2$), we consider the $l_\infty$ adversaries in this experiments section. This is because $l_\infty$ is one of the most common adversarial attacks, and it is also used in our main baseline \cite{kim2020adversarial}.
\vspace{-0.05in}
\paragraph{\textbf{With VAE:}} We test the effectiveness of our robustness regularizer when used with VAE (as described in Section~\ref{with_vae}) with the MNIST dataset. The main baseline we consider in this experiment is AE (auto-encoder) with a TRADES-like regularizer. This is because the reconstruction in AE can be viewed as a prediction task, so we can use the regularizer in TRADES to force the reconstruction of an image to be similar to the reconstruction of its adversaries (more details of this baseline in the supplementary file). To make the comparison fair for AE (that has no built-in regularizer), we set $\beta_{VAE}=0$ in our experiment, although we note that slightly increasing this value leads to even better representation. Apart from this baseline, we also include supervised learning models (Standard Training, AT \cite{madry2017towards} and TRADES \cite{zhang2019theoretically}) for reference. For this experiment, we consider the $l_\infty$ perturbation with $\epsilon=0.1$. For this ``toy'' experiment, we use a simple convolutional neural network with four 3$\times$3 convolutional layers (followed by an average pooling layer) as the representation network.

\vspace{-0.05in}
\paragraph{\textbf{With Self-Supervised Learning (SimCLR):}} For the more challenging real-world datasets (CIFAR10, CIFAR100, ImageNet), learning generative features of images with VAE is difficult, so we use self-supervised learning methods (SimCLR) to validate our robustness regularizer. In this experiment, we consider RoCL \cite{kim2020adversarial} as our main baseline. We also include supervised learning models (Standard Training, AT \cite{madry2017towards} and TRADES \cite{zhang2019theoretically}) for reference.

We train our self-supervised model with 2000 epochs (except for ImageNet, where we train a total number of 200 epochs for computational reasons). Similar to \cite{kim2020adversarial}, we consider the $l_\infty$ attack and defense. Following standard in the adversarial robustness literature, we set the training perturbation radius as $\epsilon=8/255$ for CIFAR10/CIFAR100 and $\epsilon=2/255$ for ImageNet.

For CIFAR10, we use ResNet18 \cite{he2016deep} as the backbone network with a batchsize of 1024. As for the CIFAR100 dataset, we use ResNet50 as the backbone, and also with a batchsize of 1024. For ImageNet, we use a batchsize of 4096 with a ResNet50 network. For all experiments, we use a starting learning rate of $1.2$ and perform Cosine annealing on the learning rate over the course of training.
\vspace{-0.1in}
\subsection{Results}
\subsubsection{With VAE}

\begin{table*}[t!]
\centering
\captionsetup{justification=centering}
	\caption{\textbf{MNIST} with $l_\infty$ adversaries. Training and testing $\epsilon$ are set to $0.1$ in this experiment. Our method (VAE+Urkle) outperforms the baseline AE+TRADES, while approaching the robustness similar to supervised adversarial training methods.}
 	\label{mnist}
		\begin{tabular}{ccc}
			\toprule
			Models & Clean Acc & Adversarial Acc \\
			\midrule
			Standard Training  & 99.3±0.1 & 1.0±0.3 \\
			AT  & 99.0±0.1 & \textbf{98.5±0.2} \\
			TRADES  & 99.1±0.1 & 98.3±0.1  \\
    		\midrule
			AE + TRADES  & 99.1±0.1 & 96.6±0.4 \\
			VAE + Urkle (ours)  & 99.1±0.1 & \textbf{98.0±0.1} \\
			\bottomrule
		\end{tabular}
\end{table*}

\begin{table*}[t!]
 \centering
 \captionsetup{justification=centering}
 \caption{ \textbf{CIFAR10}: Results of supervised and self-supervised methods trained with $l_\infty$ adversaries and $\epsilon=8/255$ (when applicable). Our method (SimCLR+Urkle) significantly outperforms the baseline RoCL, especially with unseen (and stronger) attack $\epsilon=16/255$.}
  \label{cifar10}
 \centering
 \small
 	\begin{tabular}{ccccccccccc}
 		\toprule
        & \multicolumn{9}{c}{CIFAR10}\\
        \cmidrule(r){2-10}
		& \multicolumn{3}{c}{Fully Labeled Data} & \multicolumn{3}{c}{5000 Labeled Data} & \multicolumn{3}{c}{1000 Labeled Data}\\
		\cmidrule(r){2-4} \cmidrule(r){5-7} \cmidrule(r){8-10}
		Model  & Clean &  8/255  & 16/255  & Clean &  8/255  & 16/255 &  Clean &  8/255  & 16/255 \\
		\midrule
		Standard Training & 92.82 & 0.00  & 0.00 & 79.09 & 0.00 & 0.00 & 60.39 & 0.00 & 0.00 \\
		AT  & 81.63 & 44.50 & 14.47 & 64.97 & 24.52 & 6.69 & 50.03 & 15.26 & 3.91 \\
		TRADES  & 77.03 & \textbf{48.01}  & \textbf{22.55} & 63.14 & \textbf{25.97} & \textbf{7.78} & 48.32 & \textbf{15.92} & \textbf{3.97} \\
		\midrule
		SimCLR  & 91.25 & 0.63 & 0.15 & 84.31 & 0.84 & 0.12 & 82.15 & 0.55 & 0.11 \\
		RoCL  & 83.71 & 40.27 & 9.55 & 78.82 & 36.93 & 9.90 & 76.49 & 34.44 & 8.96 \\
		SimCLR+Urkle (ours) & 82.31 & \textbf{42.56} & \textbf{14.29} & 77.47 & \textbf{38.76} & \textbf{12.94} & 74.82 & \textbf{37.56} & \textbf{12.22} \\
 		\bottomrule
 	\end{tabular}
\end{table*}

Table~\ref{mnist} shows the results of our model and the baselines. MNIST is a relatively easy dataset, so most methods perform reasonably well. Noticeably, our model (VAE+Urkle) outperforms AE+TRADES by 1.4\%, and approaches the performance of supervised adversarial training methods.

\vspace{-0.1in}
\subsubsection{With Self-Supervised Learning (SimCLR):} \;
\vspace{-0.1in}
\paragraph{\textbf{CIFAR10 and CIFAR100:}} As aforementioned, in the CIFAR10 and CIFAR100 experiments, we train all models (except for Standard Training and SimCLR) with $\epsilon = 8/255$. We evaluate these models against $l_\infty$ adversarial attack with strength $\epsilon=8/255$ and $\epsilon=16/255$. Table~\ref{cifar10} and Table~\ref{cifar100} show that our method (SimCLR+Urkle) clearly outperforms RoCL (with a slight trade-off of clean accuracy in some experiments), indicating the effectiveness of our robustness regularizer. Especially, our method is more robust against unseen attack strength ($\epsilon$=16/255).

As discussed earlier, there is a trade-off between the clean accuracy and adversarial robustness in our model (as well as other adversarial training methods such as TRADES), resulting in a slightly lower (around 1\%) clean accuracy of our model when compared to RoCL. However, our model outperforms RoCL significantly in terms of adversarial robustness (especially for $\epsilon=16/255$, which is an unseen attack strength). We find that this is reasonable and the improved robustness is well worth the trade-off. Note that a similar trend can be observed for TRADES and AT, where TRADES achieves lower clean accuracy but much better adversarial robustness when compared to AT.
\begin{table*}[t!]
 \centering
 \captionsetup{justification=centering}
 \caption{\textbf{CIFAR100}: esults of supervised and self-supervised methods trained with $l_\infty$ adversaries and $\epsilon=8/255$ (when applicable). Our method (SimCLR+Urkle) significantly outperforms the baseline RoCL.}
  \label{cifar100}
 \centering
 \small
 	\begin{tabular}{ccccccccccc}
 		\toprule
        & \multicolumn{9}{c}{CIFAR100}\\
        \cmidrule(r){2-10}
		& \multicolumn{3}{c}{Fully Labeled Data} & \multicolumn{3}{c}{5000 Labeled Data} & \multicolumn{3}{c}{1000 Labeled Data}\\
		\cmidrule(r){2-4} \cmidrule(r){5-7} \cmidrule(r){8-10}
		Model  & Clean &  8/255  & 16/255  & Clean &  8/255  & 16/255 &  Clean &  8/255  & 16/255 \\
		\midrule
		Standard Training & 70.34 & 0.00 & 0.00 & 26.59 & 0.00 & 0.00 & 12.14 & 0.00 & 0.00 \\
		AT & 52.87 & \textbf{19.46} & \textbf{6.80} & \textbf{21.05} & 5.30 & 1.52 & 12.46 & \textbf{3.26} & 0.95 \\
		TRADES & 56.96 & 18.54 & 4.48 & 20.35 & \textbf{6.41} & \textbf{1.63} & 13.78 & 3.19 & \textbf{1.32} \\
		\midrule
		SimCLR  & 58.79 & 0.47 & 0.00 & 53.23 & 0.46 & 0.12 & 44.16 & 0.57 & 0.26 \\
		RoCL  & 52.19 & 22.00 & 8.35 & 40.60 & 18.86 & 7.25 & 30.23 & 13.83 & 5.55 \\
		SimCLR+Urkle (ours) & 53.81 & \textbf{24.82} & \textbf{10.63} & 42.66 & \textbf{22.34} & \textbf{11.06} & 32.12 & \textbf{14.40} & \textbf{6.71} \\
 		\bottomrule
 	\end{tabular}
\end{table*}

\begin{table}[t!]
	\caption{\textbf{ImageNet} with $l_\infty$ adversaries and $\epsilon=2/255$.}
 	\label{imagenet}
	\centering
	\resizebox{0.7\columnwidth}{!}{
		\begin{tabular}{c|cc}
			\toprule
			Models & Clean Acc & Adversarial Acc \\
			\midrule
			RoCL  & 52.46 & 23.19 \\
			SimCLR + Urkle (ours)  & 51.19 & \textbf{25.69} \\
			\bottomrule
		\end{tabular}
}
\vspace{-0.1in}
\end{table}
\vspace{-0.1in}
\paragraph{\textbf{ImageNet:}} Preliminary result on the ImageNet dataset (Table~\ref{imagenet}) also indicates that our method outperforms RoCL on this large scale dataset.
\vspace{-0.1in}
\paragraph{\textbf{Experimental Results with limited numbers of labels:}} To take advantage of the unsupervised nature of our method, we also conduct the experiments when the number of labeled images is limited. To be re-emphasize, we train our SSL model and the unsupervised robustness regularizer \textbf{without any labels}, and the limited number of labels are only used for the training of the task-specific output head $\hat{p}(y|z)$ (without adversarial training). Table~\ref{cifar10} and Table~\ref{cifar100} report the results for CIFAR10 and CIFAR100 with 5000 and 1000 labels. It can be clearly seen that supervised methods fail to learn a robust model with such a few available labels. Among the unsupervised robust representation learning methods, our model also significantly outperforms the baseline RoCL in these scenarios. 
\section{Conclusion}
To conclude, in this paper, we develop a task-agnostic robust representation learning method. The core idea behind our method is to minimize a task-independent robustness regularizer that enforces the representation of an image to be close to that of its adversarial examples. This is motivated by our theoretical result that, for a model using the learned representation for a downstream task, its adversarial loss is bounded by the loss on clean image plus the above task-independent regularizer. Our regularizer can be straightforwardly applied to almost any existing representation learning method (with only an adaptation to a probabilistic representation). To the best of our knowledge, our work is one of the first to study the problem of unsupervised robust representation learning in a principled way, and show that the robustness can be theoretically transferred to the downstream tasks. We demonstrate our proposed regularizer with several unsupervised/self-supervised methods (from VAE to SimCLR), and conduct extensive experiments on MNIST, CIFAR10, CIFAR100 and ImageNet to validate our method. Experimental results suggest that our proposed method (when used with SimCLR) achieves SOTA performance on the unsupervised robust representation learning task.

\bibliographystyle{unsrt}
\bibliography{sections/citations}

\begin{thebibliography}{10}

\bibitem{chen2020simple}
Ting Chen, Simon Kornblith, Mohammad Norouzi, and Geoffrey Hinton.
\newblock A simple framework for contrastive learning of visual
  representations.
\newblock In {\em International conference on machine learning}, pages
  1597--1607. PMLR, 2020.

\bibitem{grill2020bootstrap}
Jean-Bastien Grill, Florian Strub, Florent Altch{\'e}, Corentin Tallec,
  Pierre~H Richemond, Elena Buchatskaya, Carl Doersch, Bernardo~Avila Pires,
  Zhaohan~Daniel Guo, Mohammad~Gheshlaghi Azar, et~al.
\newblock Bootstrap your own latent: A new approach to self-supervised
  learning.
\newblock {\em arXiv preprint arXiv:2006.07733}, 2020.

\bibitem{chen2021exploring}
Xinlei Chen and Kaiming He.
\newblock Exploring simple siamese representation learning.
\newblock In {\em Proceedings of the IEEE/CVF Conference on Computer Vision and
  Pattern Recognition}, pages 15750--15758, 2021.

\bibitem{zbontar2021barlow}
Jure Zbontar, Li~Jing, Ishan Misra, Yann LeCun, and St{\'e}phane Deny.
\newblock Barlow twins: Self-supervised learning via redundancy reduction.
\newblock {\em arXiv preprint arXiv:2103.03230}, 2021.

\bibitem{madry2017towards}
Aleksander Madry, Aleksandar Makelov, Ludwig Schmidt, Dimitris Tsipras, and
  Adrian Vladu.
\newblock Towards deep learning models resistant to adversarial attacks.
\newblock {\em arXiv preprint arXiv:1706.06083}, 2017.

\bibitem{zhang2019theoretically}
Hongyang Zhang, Yaodong Yu, Jiantao Jiao, Eric Xing, Laurent El~Ghaoui, and
  Michael Jordan.
\newblock Theoretically principled trade-off between robustness and accuracy.
\newblock In {\em International Conference on Machine Learning}, pages
  7472--7482. PMLR, 2019.

\bibitem{kim2020adversarial}
Minseon Kim, Jihoon Tack, and Sung~Ju Hwang.
\newblock Adversarial self-supervised contrastive learning.
\newblock {\em arXiv preprint arXiv:2006.07589}, 2020.

\bibitem{szegedy2013intriguing}
Christian Szegedy, Wojciech Zaremba, Ilya Sutskever, Joan Bruna, Dumitru Erhan,
  Ian Goodfellow, and Rob Fergus.
\newblock Intriguing properties of neural networks.
\newblock {\em arXiv preprint arXiv:1312.6199}, 2013.

\bibitem{goodfellow2014explaining}
Ian~J Goodfellow, Jonathon Shlens, and Christian Szegedy.
\newblock Explaining and harnessing adversarial examples.
\newblock {\em arXiv preprint arXiv:1412.6572}, 2014.

\bibitem{wong2020fast}
Eric Wong, Leslie Rice, and J~Zico Kolter.
\newblock Fast is better than free: Revisiting adversarial training.
\newblock {\em arXiv preprint arXiv:2001.03994}, 2020.

\bibitem{andriushchenko2020understanding}
Maksym Andriushchenko and Nicolas Flammarion.
\newblock Understanding and improving fast adversarial training.
\newblock {\em arXiv preprint arXiv:2007.02617}, 2020.

\bibitem{noroozi2016unsupervised}
Mehdi Noroozi and Paolo Favaro.
\newblock Unsupervised learning of visual representations by solving jigsaw
  puzzles.
\newblock In {\em European conference on computer vision}, pages 69--84.
  Springer, 2016.

\bibitem{jiang2020robust}
Ziyu Jiang, Tianlong Chen, Ting Chen, and Zhangyang Wang.
\newblock Robust pre-training by adversarial contrastive learning.
\newblock In {\em NeurIPS}, 2020.

\bibitem{hendrycks2019using}
Dan Hendrycks, Mantas Mazeika, Saurav Kadavath, and Dawn Song.
\newblock Using self-supervised learning can improve model robustness and
  uncertainty.
\newblock {\em arXiv preprint arXiv:1906.12340}, 2019.

\bibitem{alayrac2019labels}
Jean-Baptiste Alayrac, Jonathan Uesato, Po-Sen Huang, Alhussein Fawzi, Robert
  Stanforth, and Pushmeet Kohli.
\newblock Are labels required for improving adversarial robustness?
\newblock {\em Advances in Neural Information Processing Systems}, 32, 2019.

\bibitem{awasthi2021adversarially}
Pranjal Awasthi, Vaggos Chatziafratis, Xue Chen, and Aravindan Vijayaraghavan.
\newblock Adversarially robust low dimensional representations.
\newblock In {\em Conference on Learning Theory}, pages 237--325. PMLR, 2021.

\bibitem{kingma2013auto}
Diederik~P Kingma and Max Welling.
\newblock Auto-encoding variational bayes.
\newblock {\em arXiv preprint arXiv:1312.6114}, 2013.

\bibitem{higgins2016beta}
Irina Higgins, Loic Matthey, Arka Pal, Christopher Burgess, Xavier Glorot,
  Matthew Botvinick, Shakir Mohamed, and Alexander Lerchner.
\newblock beta-vae: Learning basic visual concepts with a constrained
  variational framework.
\newblock 2016.

\bibitem{lecun-mnisthandwrittendigit-2010}
Yann LeCun and Corinna Cortes.
\newblock {MNIST} handwritten digit database.
\newblock 2010.

\bibitem{krizhevsky2009learning}
Alex Krizhevsky, Geoffrey Hinton, et~al.
\newblock Learning multiple layers of features from tiny images.
\newblock 2009.

\bibitem{deng2009imagenet}
Jia Deng, Wei Dong, Richard Socher, Li-Jia Li, Kai Li, and Li~Fei-Fei.
\newblock Imagenet: A large-scale hierarchical image database.
\newblock In {\em 2009 IEEE conference on computer vision and pattern
  recognition}, pages 248--255. Ieee, 2009.

\bibitem{he2016deep}
Kaiming He, Xiangyu Zhang, Shaoqing Ren, and Jian Sun.
\newblock Deep residual learning for image recognition.
\newblock In {\em Proceedings of the IEEE conference on computer vision and
  pattern recognition}, pages 770--778, 2016.

\bibitem{nguyen2021kl}
A~Tuan Nguyen, Toan Tran, Yarin Gal, Philip~HS Torr, and
  At{\i}l{\i}m~G{\"u}ne{\c{s}} Baydin.
\newblock Kl guided domain adaptation.
\newblock {\em arXiv preprint arXiv:2106.07780}, 2021.

\end{thebibliography}

\appendix 
\section{Proofs}
\subsection{Proposition 1}
We use a similar idea as the proof of Proposition 1 in \cite{nguyen2021kl}, which is a bound on the target loss in the domain adaptation problem. This is because our setting can somewhat be cast as a domain adaptation problem, where the source domain is the clean data distribution, and the target domain is the adversarial data distribution. The proof is as below:
\begin{proof} $ $\newline
Let $a(x) = \arg \max_{x^{adv}\in A(x)} -\log \mathbb{E}_{p_\theta(z|x^{adv})}[\hat{p}_T(y|z)]$ $\forall x \in \mathcal{X}$. We need to prove that:

\begin{align}
\mathbb{E}_{p_T(x,y)}\left[-\log \mathbb{E}_{p_\theta(z|a(x))}[\hat{p}_T(y|z)]\right] 
\leq \mathbb{E}_{p_T(x,y)}\left[\mathbb{E}_{p_\theta(z|x)}[-\log \hat{p}_T(y|z)]\right] \nonumber \\
+ \frac{M}{\sqrt{2}}\sqrt{\mathbb{E}_{p(x)}\left[\max_{x^{u\_adv}\in A(x)} \text{KL}[p_\theta(z|x)|p_\theta(z|x^{u\_adv})]\right]}
\end{align}

Due to Jensen Inequality, we have:
\begin{align}
\mathbb{E}_{p_T(x,y)}\left[-\log \mathbb{E}_{p_\theta(z|a(x))}[\hat{p}_T(y|z)]\right] 
\leq \mathbb{E}_{p_T(x,y)}\left[\mathbb{E}_{p_\theta(z|a(x))}[-\log \hat{p}_T(y|z)]\right] 
\end{align}

Therefore, we only need to prove that:
\begin{align}
&\mathbb{E}_{p_T(x,y)}\left[\mathbb{E}_{p_\theta(z|a(x))}[-\log \hat{p}_T(y|z)]\right]
- \mathbb{E}_{p_T(x,y)}\left[\mathbb{E}_{p_\theta(z|x)}[-\log \hat{p}_T(y|z)]\right] \nonumber \\
\leq  &\frac{M}{\sqrt{2}}\sqrt{\mathbb{E}_{p(x)}\left[\max_{x^{u\_adv}\in A(x)} \text{KL}[p_\theta(z|x)|p_\theta(z|x^{u\_adv})]\right]}
\end{align}

We have:
\begin{align}
&\mathbb{E}_{p_T(x,y)}\left[\mathbb{E}_{p_\theta(z|a(x))}[-\log \hat{p}_T(y|z)]\right]
- \mathbb{E}_{p_T(x,y)}\left[\mathbb{E}_{p_\theta(z|x)}[-\log \hat{p}_T(y|z)]\right] \\
= &\mathbb{E}_{p_T(x,y)}\left[\int_\mathcal{Z} p_\theta(z|a(x)) [-\log \hat{p}_T(y|z)] dz - \int_\mathcal{Z} p_\theta(z|x) [-\log \hat{p}_T(y|z)] dz \right]\\
= &\mathbb{E}_{p_T(x,y)}\left[\int_\mathcal{Z} -\log \hat{p}_T(y|z) \left[p_\theta(z|a(x))-p_\theta(z|x)\right] dz\right]
\end{align}

For all $x$, let $\mathcal{A}(x)=\{z \in \mathcal{Z}|p_\theta(z|a(x))-p_\theta(z|x) \geq 0\}$ and $\mathcal{B}(x)=\{z \in \mathcal{Z}|p_\theta(z|a(x))-p_\theta(z|x) < 0\}$, then:
\begin{align}
& \mathbb{E}_{p_T(x,y)}\left[\int_\mathcal{Z} -\log \hat{p}_T(y|z) \left[p_\theta(z|a(x))-p_\theta(z|x)\right] dz\right] \\
= &\mathbb{E}_{p_T(x,y)}\Bigg[\int_{\mathcal{A}(x)} -\log \hat{p}_T(y|z) \left[p_\theta(z|a(x))-p_\theta(z|x)\right] dz \nonumber\\ 
&\quad\quad\quad+ \int_{\mathcal{B}(x)} -\log \hat{p}_T(y|z) \left[p_\theta(z|a(x))-p_\theta(z|x)\right] dz\Bigg] \\
\leq &\mathbb{E}_{p_T(x,y)}\Bigg[\int_{\mathcal{A}(x)} -\log \hat{p}_T(y|z) \left[p_\theta(z|a(x))-p_\theta(z|x)\right] dz\Bigg] \\
& \quad\quad \text{(since $-\log \hat{p}_T(y|z)$ is a non-negative quantity)} \nonumber\\
\leq &\mathbb{E}_{p_T(x,y)}\left[\int_{\mathcal{A}(x)} M \left[p_\theta(z|a(x))-p_\theta(z|x)\right] dz\right] \\
= &M \mathbb{E}_{p_T(x,y)}\left[\int_{\mathcal{A}(x)} \left|p_\theta(z|a(x))-p_\theta(z|x)\right| dz\right]
\end{align}

We have:
\begin{align}
&\int_{\mathcal{Z}} \left[p_\theta(z|a(x))-p_\theta(z|x)\right] dz = 0 \\
\Rightarrow & \int_{\mathcal{A}(x)} \left[p_\theta(z|a(x))-p_\theta(z|x)\right] dz
+ \int_{\mathcal{B}(x)} \left[p_\theta(z|a(x))-p_\theta(z|x)\right] dz = 0 \\
\Rightarrow & \int_{\mathcal{A}(x)} \left|p_\theta(z|a(x))-p_\theta(z|x)\right| dz 
= \int_{\mathcal{B}(x)} \left|p_\theta(z|a(x))-p_\theta(z|x)\right| dz \\
\Rightarrow & \int_{\mathcal{A}(x)} \left|p_\theta(z|a(x))-p_\theta(z|x)\right| dz 
= \frac{1}{2}\int_{\mathcal{Z}} \left|p_\theta(z|a(x))-p_\theta(z|x)\right| dz 
\end{align}

Due to the Pinsker's Inequality we have:
\begin{align}
&\frac{1}{2}\int_{\mathcal{Z}} \left|p_\theta(z|a(x))-p_\theta(z|x)\right| dz \nonumber\\
\leq &\frac{1}{2}\sqrt{2\int_\mathcal{Z}p_\theta(z|x)\log \frac{p_\theta(z|x)}{p_\theta(z|a(x))} dz} \\
= &\frac{1}{\sqrt{2}}\sqrt{\text{KL}[p_\theta(z|x)|p_\theta(z|a(x))]}
\end{align}

Therefore:
\begin{align}
& M \mathbb{E}_{p_T(x,y)}\left[\int_{\mathcal{A}(x)} \left|p_\theta(z|a(x))-p_\theta(z|x)\right| dz\right] \\
\leq &\frac{M}{\sqrt{2}} \mathbb{E}_{p_T(x,y)}\left[\sqrt{\text{KL}[p_\theta(z|x)|p_\theta(z|a(x))]}\right] \\
=& \frac{M}{\sqrt{2}} \mathbb{E}_{p(x)}\left[\sqrt{\text{KL}[p_\theta(z|x)|p_\theta(z|a(x))]}\right] \\
\leq& \frac{M}{\sqrt{2}} \sqrt{\mathbb{E}_{p(x)}\left[\text{KL}[p_\theta(z|x)|p_\theta(z|a(x))]\right]} \\
\leq &\frac{M}{\sqrt{2}} \sqrt{\mathbb{E}_{p(x)}\left[\max_{x^{u\_adv}\in A(x)}\text{KL}[p_\theta(z|x)|p_\theta(z|x^{u\_adv})]\right]}
\end{align}

We conclude our proof.
\end{proof}

\section{Experimental Results}

\subsection{Details on the baseline AE+TRADES}

Here we describe the baseline AE+TRADES, which we use in the VAE experiment, in more detail. In particular, an autoencoder (AE) consists of an encoder $g_\theta$ and a decoder $h_\phi$. The encoder $g$ transforms the input $x$ to a representation $z$ (often lower dimensional), i.e., $z=g_\theta(x)$; while the decoder $h$ tries to reconstruct the original input from the representation, i.e., $\hat{x}=h_\phi(z)$. Using the mean squared (l2) distance for the reconstruction, the objective of AE is:

\begin{align}
	\mathbb{E}_{p(x)}[||x-h_\phi(g_\theta(x))||_2^2]
\end{align}

Since the reconstruction $h\circ g$ can be treated as a prediction task (predicting the original $x$), we can use a TRADES-like regularizer to make the model more robust, and thus the encoder is also more robust. Specially, we can enforce the reconstruction of an image to be similar to that of its adversaries. The final objective is:

\begin{align}
	&\mathbb{E}_{p(x)}[||x-h_\phi(g_\theta(x))||_2^2] \nonumber\\
	& + \beta \mathbb{E}_{p(x)}[\max_{x^{adv}\in A(x)}||h_\phi(g_\theta(x^{adv}))-h_\phi(g_\theta(x))||_2^2]
\end{align}

Note that we can only use this baseline with AE (not VAE) because the TRADES regularizer only works straightforwardly with a deterministic model / prediction.

\subsection{Experimental Settings}
\subsubsection{With VAE}
For the VAE MNIST experiment, the encoder (representation network) is a simple convolutional network with 4 \texttt{3x3} convolutional layers (with the last layer has 128 channels so that the representation has 128 dimension), followed by an average pooling layer.

With AE and VAE, the decoder consists of 4 ConvTranspose2d layers, mirroring the encoder.

The classifier (from a representation to the prediction label) is a composition of 3 fully connected layers (with batchnorm and ReLU activation in-between).

\subsubsection{With SimCLR}
For a ResNet18 backbone network, we set the representation dimension to 512, while that for a ResNet50 backbone is 2048. We set the initial learning rate to $1.2$ and do Cosine annealing to $0$. Other experiment details have been presented in the main paper. In addition, please also refer to our code for more details.

\end{document}